\documentclass{article}

\pdfoutput=1

\usepackage[utf8]{inputenc} % allow utf-8 input
\usepackage[T1]{fontenc}    % use 8-bit T1 fonts
\usepackage{hyperref}       % hyperlinks
\usepackage{url}            % simple URL typesetting
\usepackage{booktabs}       % professional-quality tables
\usepackage{amsfonts}       % blackboard math symbols
\usepackage{nicefrac}       % compact symbols for 1/2, etc.
\usepackage{microtype}      % microtypography

\usepackage{natbib}
\usepackage{todonotes}
\usepackage{textcomp}
\usepackage{lipsum}

\usepackage{enumitem}
% Def haut et pied de page
\usepackage{fancyhdr}

\usepackage{amsmath,amssymb,amsthm,mathrsfs,mathtools}
\usepackage{graphicx}

\usepackage[ruled, vlined, linesnumbered]{algorithm2e}

\usepackage{float}
\usepackage{stmaryrd}

\usepackage{caption}
\usepackage{subcaption}

%% No natural indent on itemize/enumerate balises
\usepackage{enumitem}
\setlist[itemize]{leftmargin=*}

\usepackage{tikz}
\usepackage{lipsum}
\usepackage{booktabs} % for table

\usepackage{multirow}

% Remove the proof environnement
\usepackage{versions}
%	\excludeversion{proof}

\theoremstyle{remark}
\newtheorem{remark}{Remark}
\theoremstyle{definition}
\newtheorem{definition}{Definition}

\newtheorem{proposition}{Proposition}

\DeclareMathOperator*{\argmin}{arg\,min}
\usepackage{xcolor}

\newcommand\calX{\boldsymbol{\mathcal{X}}}
\newcommand\calY{\boldsymbol{\mathcal{Y}}}
\newcommand\calZ{\boldsymbol{\mathcal{Z}}}
\newcommand\calEpsilon{\boldsymbol{\mathcal{E}}}

\newcommand\calD{\boldsymbol{\mathcal{D}}}

\newcommand\calA{\boldsymbol{\mathcal{A}}}

\newcommand\IR{\mathbb{R}}

\newcommand\bz{\boldsymbol{z}}
\newcommand\bx{\boldsymbol{x}}

\newcommand\balpha{\boldsymbol{\alpha}}

\newcommand\bd{\boldsymbol{d}}

\newcommand\bigO{\mathcal{O}}

\newcommand\bZ{\boldsymbol{Z}}

\newcommand\bX{\boldsymbol{X}}

\makeatletter
\newcommand{\eqnum}{\refstepcounter{equation}\textup{\tagform@{\theequation}}}
\newcommand*{\transpose}{%
	{\mathpalette\@transpose{}}%
}
\newcommand*{\@transpose}[2]{%
	\raisebox{\depth}{$\m@th#1\intercal$}%
}

\title{Multivariate Convolutional Sparse Coding \\ with Low Rank Tensor}

\author{
	Pierre Humbert$^1$ \qquad
	Julien Audiffren$^1$ \\
	Laurent Oudre$^2$ \qquad
	Nicolas Vayatis$^1$ \\
	$^1$CMLA, Ecole Normale Supérieure de Cachan, 
	\\ CNRS, Université Paris-Saclay
	94 235 Cachan cedex, France \\
	$^2$L2TI, Université Paris 13, 93430 Villetaneuse, France
}

\begin{document}
	\setlength{\abovedisplayskip}{3pt}
	\setlength{\belowdisplayskip}{3pt}
	\setlength{\belowdisplayskip}{3pt}

	\maketitle

\begin{abstract}
This paper introduces a new multivariate convolutional sparse coding 
based on tensor algebra with a general model enforcing both element-wise sparsity and  low-rankness of the activations tensors.
By using the CP decomposition, this model achieves a significantly more 
efficient encoding of the multivariate signal -- particularly in the 
high order/ dimension setting -- resulting in better performance.
We prove that our model is closely related to the Kruskal tensor 
regression problem, offering interesting theoretical guarantees to our 
setting. Furthermore, we provide an efficient optimization algorithm based on 
alternating optimization to solve this model.
Finally, we evaluate our algorithm with a large range of experiments, 
highlighting its advantages and limitations.
\end{abstract}
	\section{Introduction}

In  recent years, dictionary learning and convolutional sparse coding techniques (CSC) have been successfully applied in a wide range of topics, including image classification \citep{mairal2009supervised,huang2007sparse}, image restoration \citep{aharon2006rm}, and signal processing \citep{ mairal2010online}. 
The main idea behind these representations is to conjointly learn a dictionary containing the patterns observed in the signal, and sparse activations that encode the temporal or spatial locations where these patterns occur.
Previous works have mainly focused on the study of univariate signals or images \citep{garcia2017convolutional}, solving the dictionary learning problem using sparsity constraint on atoms activations.

In many applications ranging from videos to neuroimaging, data are multivariate and therefore better encoded by a tensor structure \citep{zhou2013tensor,cichocki2015tensor}.
This has led in the recent years to an increased interest in adapting statistical learning methods to the tensor framework, in order to efficiently capture multilinear relationships \citep{su2012multivariate}.
In particular, the low rank setting has been the subject of many previous works in the past few years, as an efficient tool to exploit the structural information of the data, particularly in regression problems \citep{zhou2013tensor, rabusseau2016low, li2017near, he2018boosted}.
However, traditional low rank approaches for regression problems are difficult in this setting, as estimating or finding the rank of a tensor with order at least 3 is significantly more complex  than in the traditional matrix case \citep{haastad1990tensor}, and requires new algorithms \citep{li2017near}.
More recently, \citep{he2018boosted} have proposed a new approach --inspired by the SVD decomposition --  that combines both low rank and sparsity constraints in the multivariate regression setting.

In this  article, we propose to extend the classical CSC model to multivariate data by introducing a new tensor CSC model that combines both low-rank and sparse activation constraints.
This model that we call Kruskal Convolutional Sparse Coding (\textbf{K-CSC}) (or Low-Rank Multivariate Convolutional Sparse Coding) offers the advantages of 1) taking into account the underlying structure of the data and 2) using fewer activations to decompose the data, resulting in an improved summary (dictionary) and a better reconstruction of the original multivariate signal.
%\modif{As mentioned in [], }the multivariate convolutional sparse coding problem  has received only limited attention in the past few years, and to the authors knowledge no previous works have considered the application of sparse low rank multivariate convolutional sparse representations to multi-channel signals.

Our main contributions are as follows. First, we show that under mild assumptions, this new problem can be rewritten as a  Kruskal tensor regression, from which we discuss interesting properties. 
Then, we provide an efficient algorithm, based on an alternating procedure, that solves the \textbf{K-CSC} problem and scales with the number of activation parameters. 
Finally, we evaluate our algorithm experimentally using a large range of simulated and real tensor data, illustrating the advantages and drawbacks of our model.
	% \section{Convolutional dictionary learning with tensor}
\section{Low-Rank Multivariate Convolutional Sparse Coding}
\subsection{Preliminaries on tensors}

We first introduce the notation used in this paper and briefly recall some element of tensor algebra (see \citep{kruskal1977three,kolda2006multilinear,kolda2009tensor,sidiropoulos2017tensor} for a more in-depth introduction). 

Across the paper, we use calligraphy font for tensors ($\calX$) bold uppercase letters for matrices ($\bX$) bold lowercase letters for vectors ($\bx$) and lowercase letter for scalars ($x$).
Let $\|\calX\|_F$ and $\|\calX\|_1$ respectively  denotes the Frobenius and the $\ell_1$ norms, and let  $\big< \cdot \big>_F$ be the scalar product associated with  $\| \cdot \|_F$. The symbol $\circ$ refers to the the outer product, $\otimes$ to the Kronecker product, $\odot$ to the Khatri-Rao product, and $\times_m$  denotes the $m$-product. The symbol $\star_{1, \cdots, p}$ refers to the multidimensional discrete convolutional operator between scalar valued functions where the subscript indices are the dimension involved (see Figure \ref{fig:multi_conv}).
When the signal is unidimensional, $\star_1$ reduces to $\star,$ the 1-D discrete convolutional operator.
\begin{proposition}\textit{(CP Decomposition.)}
\label{prop:1}
For any $\calX \in \mathbb{X} \triangleq \mathbb{R}^{n_1} \times \ldots \times \mathbb{R}^{n_p}$, $\exists R>0,$ and, $ \bx^{(i)}_{r} \in  \mathbb{R}^{n_i}$, $1\le i \le p$, $1 \le r \le R$,  such that
	\begin{equation}\label{eq: cp dec}
	\calX =\sum_{r=1}^{R} \bx^{(1)}_{r} \circ \cdots \circ \bx^{(p)}_{r},
	\end{equation}
where $\forall i\geq 2, \lVert \bx^{(i)}_{r} \rVert_F = 1$. The smallest $R$ for which  such decomposition exists is called the \textit{Canonical Polyadic rank} of $\calX$ ($\text{CP-rank}(\calX)$ or rank$(\calX)$ for short), and in this case  \eqref{eq: cp dec} is referred to as the CP decomposition of $\calX$.

\end{proposition}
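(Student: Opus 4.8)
The statement contains two claims that I would handle in sequence: first, that at least one decomposition of the prescribed form exists, and second, that among all admissible values of $R$ there is a smallest one. The existence part is purely constructive, and the minimality part is an immediate appeal to the well-ordering of the positive integers. For existence, the plan is to exhibit an explicit decomposition coming from the canonical bases. Write $(\boldsymbol{e}_j^{(i)})_{1\le j\le n_i}$ for the canonical basis of $\mathbb{R}^{n_i}$ and $x_{j_1\cdots j_p}$ for the entries of $\calX$; then
\begin{equation}
\calX \;=\; \sum_{j_1=1}^{n_1}\cdots\sum_{j_p=1}^{n_p} x_{j_1\cdots j_p}\;\boldsymbol{e}_{j_1}^{(1)}\circ\cdots\circ\boldsymbol{e}_{j_p}^{(p)},
\end{equation}
which, after re-indexing the multi-index $(j_1,\dots,j_p)$ by a single index $r$ ranging over $\{1,\dots,N\}$ with $N=\prod_{i=1}^p n_i$, is a sum of at most $N$ outer products of (scalar multiples of) canonical vectors.

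It then remains to enforce the normalization $\lVert \bx_r^{(i)}\rVert_F = 1$ for $i\ge 2$, which I would do using the multilinearity of the outer product: in each term, push the scalar $x_{j_1\cdots j_p}$ (the canonical vectors already have unit Frobenius norm, so nothing else needs rescaling) into the first factor, setting $\bx_r^{(1)}=x_{j_1\cdots j_p}\,\boldsymbol{e}_{j_1}^{(1)}$ and $\bx_r^{(i)}=\boldsymbol{e}_{j_i}^{(i)}$ for $i\ge 2$. Terms with vanishing scalar are discarded; in the degenerate case $\calX=0$ one instead keeps the single term $\bx_1^{(1)}=0$ with arbitrary unit vectors in the remaining modes, so that $R=1>0$ still holds. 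This yields a decomposition of exactly the form \eqref{eq: cp dec} with $R\le N$.

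Finally, for the existence of the CP-rank, let $S\subseteq\{1,2,\dots\}$ be the set of positive integers $R$ admitting a decomposition of the form \eqref{eq: cp dec}. The construction above shows $S\neq\varnothing$ (it contains $N$, or $1$ when $\calX=0$), so $S$ has a least element by the well-ordering principle; this least element is by definition $\text{CP-rank}(\calX)$ and the corresponding decomposition is the CP decomposition. I do not anticipate a genuine obstacle here: the only points needing care are the multilinearity bookkeeping when absorbing scalars into the first factor and the treatment of $\calX=0$. In particular, finiteness of the rank is automatic from the bound $R\le\prod_{i=1}^p n_i$, so no compactness or limiting argument is required — unlike for related notions such as border rank.
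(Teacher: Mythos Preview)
Your argument is correct. The canonical-basis expansion gives existence with $R\le\prod_i n_i$, the multilinearity of the outer product lets you absorb all scalars into the first factor so that the remaining factors are unit canonical vectors, and well-ordering yields a minimal $R$. The handling of $\calX=0$ is also fine.

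There is nothing to compare against, however: the paper does not prove Proposition~\ref{prop:1}. It is stated in the ``Preliminaries on tensors'' subsection as a standard fact about the CP decomposition, with pointers to \citep{kruskal1977three,kolda2006multilinear,kolda2009tensor,sidiropoulos2017tensor} for background, and is used only to fix notation (in particular to set up the Kruskal operator in the definition that follows). Your constructive proof is the usual one and would be an appropriate justification if one were required.
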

\begin{definition}\textit{(Kruskal operator.)} 
 With the notation of Proposition \ref{prop:1},the \textit{Kruskal operator} $[\![ \quad\cdot\quad ]\!]$ is defined as
$$
 [\![\bX_1, \cdots, \bX_p]\!] \triangleq \sum_{r=1}^{R} \bx^{(1)}_{r} \circ \cdots \circ \bx^{(p)}_{r}.
$$
where $ \bX_i = \left[\bx^{(i)}_{1}, \ldots, \bx^{(i)}_{R}\right] \in \IR^{n_i\times R},$ $1\le i \le p$.
\end{definition}

\vspace{-1.5em}
\subsection{Model formulation}
Let $\calY \in \mathbb{Y}\triangleq\IR^{n_1 \times \cdots \times n_p}$ be a multidimensional signal and $\calD_1, \cdots, \calD_K$ in $\mathbb{D} \triangleq \IR^{w_1 \times \cdots \times w_p}$ a collection of $K$ multidimensional atoms such that $\forall i, 1 \leq w_i \leq n_i$.  The  \textit{Kruskal Convolutional Sparse Coding model} (\textbf{K-CSC})  is defined as
\begin{equation}
\label{equ:convolve_p_rank}
\calY = \sum_{k=1}^{K} \calD_k \star_{1, \cdots, p} \calZ_k + \calEpsilon,
\end{equation}
where 
A) $\forall 1 \le k \le K$, $\calZ_k \in \mathbb{Z} \triangleq \IR^{m_1 \times \cdots \times m_p}$ (with $m_i = n_i - w_i + 1$) are sparse activation tensors with  CP-rank lower than $R$, with $R$ small, and B)
 $\calEpsilon \in \mathbb{Y}$ is an additive (sub)gaussian noise, whose every component are independent and centered.

\textbf{Advantages of low-rank tensor.} 
The addition of a low rank constraint offers two main advantages.
 First, it reduces the number of unknown activation parameters from $K\big(\prod_{i=1}^{p} m_i\big)$ (unconstrained model) to $K\big(R\sum_{i=1}^{p} m_i\big)$. For instance, in the regression case of typical RGB images of size $n$-by-$n$-by-$3$, the number of parameters decrease to  $R\cdot (2n + 3)$ instead of $3 n^2$, resulting in a better scalability of the problem. Second, it  exploits the structural information of $\calY$, and has already been proved to be effective in various contexts (e.g. \citep{guo2012tensor, liu2013tensor}). For example, previous works have shown that the vectorization of an image removes the inherent spatial structure of it while low rank tensor regression produces more interpretable results \citep{zhou2013tensor}.

\textbf{Relation with separable convolution.} 
The low-rank constraint imposes that each activation  $\calZ_k$ has a $\text{CP-rank}(\calZ_k) \leq R$ and writes as the sum of at most $R$ separable filters (product of multiple one dimensional filters). Problem  \eqref{equ:convolve_p_rank} is therefore a \textit{separable convolution problem}, which allows to use a FFT resolution and to significantly  speed up the calculus of the convolution.  As an example, the complexity of filtering  an $n_1$-by-$n_2$ image with a $w_1$-by-$w_2$ non-separable filter  is  $\bigO(n_1 n_2 w_1 w_2)$  -- instead of  $\bigO(n_1 n_2 (w_1 +w_2) )$ for a separable filter.

\begin{remark}
	For simplicity, we assume in this paper that, $\forall k$, $\text{CP-Rank}(\calZ_k) = R$. However, it is straightforward to extend $\eqref{equ:convolve_p_rank}$ to a model where each tensor $\calZ_k$ have a different CP-rank (i.e. $\text{CP-Rank}(\calZ_k) = R_k$).
\end{remark}

\begin{figure}
	\centering
	\includegraphics[width=0.5\linewidth]{./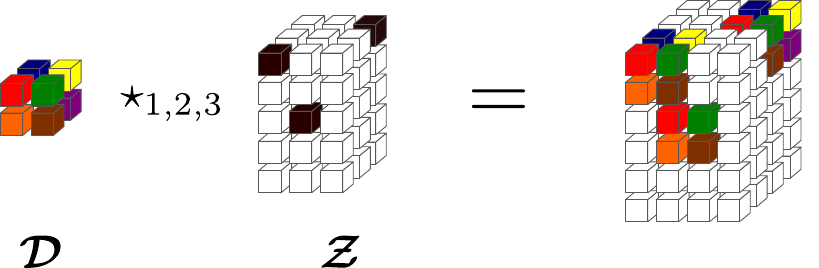}
	\caption{Illustration of the multidimensional convolution with $3$-th order tensors, where each cube represents a dimension and each axis an order. Notice that the result has one additional dimension in each order.}
	\label{fig:multi_conv}
\end{figure}

	\subsection{Kruskal CSC as Tensor Regression}

CSC models are often rewritten as a regression problem by using a circulant dictionary matrix \citep{garcia2018convolutional}. 
In this section, we adopt the same strategy. We show that the \textbf{K-CSC} model \eqref{equ:convolve_p_rank} can be reformulated as an instance of the rank-$R$ Generalized Linear tensor regression Models introduced in \citep{zhou2013tensor} -- also known as Kruskal tensor regression model -- \citep{zhou2013tensor} and then use this new formulation to discuss the properties of the model.
First we introduce the notion of  circulant tensor, which is a generalization of the circulant matrix.
\begin{definition}\textit{(Circulant tensor.)}
	Let  $w_1 < n_1,\ldots, w_p < n_p \in \mathbb{N_*}$ and $\calD \in  \IR^{w_1 \times \cdots \times w_p}$. We define the \textit{(quasi)}-circular tensor generated by $\calD$,
	$\text{$\mathcal{C}$irc}(\calD) \in \IR^{(n_1 \times m_1) \cdots \times (n_p \times m_p)}$,  as follows
	\begin{equation*}
	\text{$\mathcal{C}$irc}(\calD)(\ell_1, k_1, \cdots, \ell_p,k_p) = \left\{
	\begin{array}{ll}
	&0 \quad
	\mbox{if $\exists 1 \le i\le p$  s.t. $\ell_i < k_i$ or $\ell_i\ge k_i+w_i$  } \\
	& \calD(\ell_1-k_1, \cdots, \ell_p-k_p) \quad \mbox{otherwise.}
	\end{array}
	\right.
	\end{equation*}
\end{definition}
In other word, $\text{$\mathcal{C}$irc}(\calD)$ contains the atom $\calD$ which is translated in every directions. The following proposition shows the equivalence between \textbf{K-CSC} model and Kruskal regression. The consequence of this relation are studied in the next Section.

\begin{proposition}\textit{(K-CSC as Kruskal regression.)} \label{prop: multiconv regress}
	Let $\calD_1, \cdots, \calD_K$ in $\mathbb{D}$ be  a collection of $K$ multidimensional atoms. Then, \eqref{equ:convolve_p_rank} is equivalent to 
	\begin{equation}
	\label{equ:MultiD_reg}
	\calY  = \calA(\calZ) + \calEpsilon,
	\end{equation}
	where $\calZ$ is in $\IR^{K} \times \mathbb{Z}$ such that $\calZ_k \triangleq \calZ(k, \cdot, \cdots, \cdot) = \sum_{r=1}^{R} z_{k, r}^{(1)} \circ \cdots \circ z_{k, r}^{(p)}$ and $\calA$ is the linear map defined by 	
	$$\begin{array}{lclc}
	\calA : &\IR^{K} \times \mathbb{Z} &\xrightarrow{\hspace*{.7cm}}& \mathbb{Y}\\
	&\calZ &\xmapsto{\hspace*{.7cm}}& \Big(\sum_{k=1}^{K} \big\langle\tilde{\calD}_{k; i_1, \cdots, i_p}, \calZ_k \big\rangle_F\Big)^{n_1,\ldots,n_p}_{i_1=1,\ldots,i_p=1},
	\end{array}$$
where  $\forall k, \tilde{\calD}_{k; i_1, \cdots, i_p}=\text{$\mathcal{C}$irc}(\calD_k)(i_1, \cdot, i_2, \cdot, \cdots, i_p, \cdot)$.
\end{proposition}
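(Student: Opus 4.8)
The plan is to establish the equivalence entry by entry: I will show that for every multi-index $(i_1,\ldots,i_p)$ the value at $(i_1,\ldots,i_p)$ of the convolutional part $\sum_{k=1}^K \calD_k \star_{1,\cdots,p} \calZ_k$ coincides with the $(i_1,\ldots,i_p)$-th component of $\calA(\calZ)$. Since the noise tensor $\calEpsilon$ is common to \eqref{equ:convolve_p_rank} and \eqref{equ:MultiD_reg}, this entry-wise identity of the deterministic parts gives the equivalence in both directions.

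First I would unfold the definition of the $p$-dimensional discrete convolution. With the conventions $m_i = n_i - w_i + 1$, it reads
\[
(\calD_k \star_{1,\cdots,p} \calZ_k)(i_1,\ldots,i_p) = \sum_{k_1,\ldots,k_p} \calD_k(i_1-k_1,\ldots,i_p-k_p)\,\calZ_k(k_1,\ldots,k_p),
\]
where the sum runs over those $(k_1,\ldots,k_p)$ for which each $k_\ell$ is a valid index of $\calZ_k$ and each argument $i_\ell - k_\ell$ is a valid index of $\calD_k$, i.e. $k_\ell \le i_\ell \le k_\ell + w_\ell - 1$. This is exactly the support pattern built into $\text{$\mathcal{C}$irc}(\calD_k)$: outside of it the circulant tensor vanishes, and on it its value equals $\calD_k(i_1-k_1,\ldots,i_p-k_p)$.

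Next I would compute the Frobenius inner product $\big\langle \tilde{\calD}_{k;i_1,\cdots,i_p}, \calZ_k \big\rangle_F$. Since $\tilde{\calD}_{k;i_1,\cdots,i_p} = \text{$\mathcal{C}$irc}(\calD_k)(i_1,\cdot,i_2,\cdot,\cdots,i_p,\cdot) \in \mathbb{Z}$ is the slice obtained by freezing the ``output'' indices to $i_1,\ldots,i_p$ and letting the ``activation'' indices vary, expanding the inner product as a sum over $(k_1,\ldots,k_p)$ and inserting the case definition of $\text{$\mathcal{C}$irc}(\calD_k)$ reproduces precisely the restricted sum above, hence
\[
\big\langle \tilde{\calD}_{k;i_1,\cdots,i_p}, \calZ_k \big\rangle_F = (\calD_k \star_{1,\cdots,p} \calZ_k)(i_1,\ldots,i_p).
\]
Summing over $k=1,\ldots,K$ gives $\calA(\calZ)(i_1,\ldots,i_p) = \big(\sum_{k=1}^K \calD_k \star_{1,\cdots,p} \calZ_k\big)(i_1,\ldots,i_p)$, so the two models coincide. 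Linearity of $\calA$ is immediate since each component is a finite sum of Frobenius inner products linear in $\calZ$, and by Proposition \ref{prop:1} the constraint $\text{CP-rank}(\calZ_k)\le R$ is exactly the statement that each slice admits a representation $\sum_{r=1}^R z_{k,r}^{(1)} \circ \cdots \circ z_{k,r}^{(p)}$, which is how $\calZ$ is parametrized in \eqref{equ:MultiD_reg}.

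The only genuinely delicate point is the bookkeeping of index conventions (starting index, ``full'' versus ``valid'' convolution, and the off-by-one hidden in $m_i = n_i - w_i + 1$), which must be fixed consistently so that the support set of the convolution sum matches, term for term, the non-zero pattern of $\text{$\mathcal{C}$irc}(\calD_k)$. Once the conventions are pinned down this is a routine reindexing; no analytic estimate is involved, so I expect no substantial obstacle beyond getting the ranges to line up.
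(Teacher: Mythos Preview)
Your proposal is correct and follows essentially the same approach as the paper: both arguments reduce to the entry-wise identity that the $p$-dimensional convolution evaluated at $(i_1,\ldots,i_p)$ equals the Frobenius inner product of $\calZ_k$ with the slice $\text{$\mathcal{C}$irc}(\calD_k)(i_1,\cdot,\ldots,i_p,\cdot)$, which is precisely what the circulant tensor is designed to encode. The only cosmetic difference is that the paper states the identity on rank-one terms $\bz^{(1)}_r\circ\cdots\circ\bz^{(p)}_r$ and then rewrites it as the multilinear product $\tilde{\calD}_{k;i_1,\cdots,i_p}\bigtimes_{m=1}^p \bz^{(m)}_r$ (making the ``Kruskal regression'' form explicit), whereas you work directly with $\calZ_k$ and invoke the CP decomposition only at the end; by bilinearity of $\langle\cdot,\cdot\rangle_F$ and of the convolution these are the same computation.
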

\begin{proof}

The proposition is a consequence of the following equality

\begin{align*}
		\Big(\calD_k \star_{1, \cdots, p} \bz^{(1)}_r \circ \cdots \circ \bz^{(p)}_r\Big)_{i_1, \cdots, i_p} &= \Big\langle \text{$\mathcal{C}$irc}(\calD_k){(i_1, :, i_2, :, \cdots, i_p, :)}, z_{r}^{(1)} \circ \cdots \circ z_{r}^{(p)} \Big\rangle_F \\
		&= \Big(\tilde{\calD}_{k;i_1, \cdots, i_p} \bigtimes^p_{m=1} \bz^{(m)}_r \Big),
\end{align*}
 where 
$ 
\tilde{\calD}_{k;i_1, \cdots, i_p} \bigtimes^p_{m=1} \bz^{(m)}_r  \triangleq \tilde{\calD}_{k;i_1, \cdots, i_p} \times_1 \bz^{(1)}_r \times _2  \bz^{(2)}_r \cdots \times_p \bz^{(p)}_r.
$
\end{proof}

	\section{Model estimation}\label{sec: model estimation}
In order to solve \eqref{equ:convolve_p_rank}, we minimize the associated negative log-likelihood with an additional sparsity constraint -- a common tool in CSC -- on the activation tensors.
This leads to the following minimization problem:

\begin{equation}
\label{equ:CSKTRridge}
\argmin_{ \big(\calD_1,\cdots,\calD_K, \bZ_{1,1}, \cdots,\bZ_{K,p}\big) \in \mathcal{S}}
\Bigg\lVert \calY - \sum_{k=1}^{K} \calD_k \star_{1, \cdots, p} [\![\bZ_{k, 1}, \cdots, \bZ_{k, p}]\!] \Bigg\rVert_F^2 + \sum_{k,\ell} \alpha_{\ell} \lVert \bZ_{k, \ell}\rVert_1,
\end{equation}
where $\forall \ell$ $\alpha_\ell >0$ and $\big(\calD_1,\cdots,\calD_K, \bZ_{1,1}, \cdots,\bZ_{K,p}\big) \in \mathcal{S}$ i.f.f. 
\begin{equation*}
\left\lbrace
\begin{aligned}
& \forall k, \calD_k \in \mathbb{D}, \| \calD_k\|_F \le 1\\
%& \forall k, \blambda_k \in \left(\mathbb{R}^+\right)^R \\
& \forall k,\ell, \bZ_{k,\ell} \in \mathbb{R}^{n_\ell \times R}  \text{ and } \| \left(\bZ_{k,\ell} \right)_{i,\cdot} \|_F = 1
\end{aligned}
\right.
\end{equation*}
Notice that the low rank constraint is embedded in $\mathcal{S}$ by the use of the \textit{Kruskal operator} $ [\![ \quad \cdot \quad ]\!]$ and the fact that   $\forall k,\ell, \quad \bZ_{k,\ell} \in \mathbb{R}^{n_\ell \times R}$. 
This use of the Kruskal operator can be seen as a generalization of the  \textit{Burer-Monteiro heuristic} for matrix  \citep{burer2003nonlinear}.

\paragraph{Multiple CP Decompositions.}
It should be noted that the CP decomposition is known to be unique when it satisfies the Kruskal condition \citep{kruskal1989rank}, but only up to permutation of the normalized factor matrices. Therefore, the  $\bZ_{k,\ell}$ that solve \eqref{equ:CSKTRridge} may not be unique, but they are isolated equivalent minimizers, and thus this problem does not negatively impact the optimization.

\paragraph{Regularizations.} 
In \eqref{equ:CSKTRridge}, the sparsity constraint enforces the sparsity of each element of the CP-decomposition for every activation tensors independently. Hence, the sparsity of each mode can be controlled. In addition, it is possible to add a specific ridge penalization (i.e. $\sum_{\ell=1}^{p} \beta_\ell \sum_{k=1}^{K} \lVert \bZ_{k, \ell}\rVert^2_F$, $(\beta_{1}, \cdots, \beta_{p}) \succeq 0$) to \eqref{equ:CSKTRridge} to limit numerical instabilities, decrease risks of overfitting and ensure the unity of the solution \citep{zhou2013tensor}.

\paragraph{Sparse Low Rank Regression.}
A consequence of Proposition \ref{prop: multiconv regress} is that \eqref{equ:CSKTRridge} can be seen as an instance of a low-rank sparse regression problem. It has been shown that while both properties are desirable, a balance between the two constraints has to be found, as the two regularizations may have adversarial influence on the minimizer   \citep{richard2012estimation}.  This is achieved in our setting by using a Ivanov regularization for the rank (CP-rank $\le R$) and a Tykhonov regularization for the sparsity ($\sum_{k,\ell} \alpha_{\ell} \lVert \bZ_{k, \ell}\rVert_1$): the solution should be as sparse as possible while having a CP-rank lower than $R$.

\subsection*{Solving the optimization problem with AK-CSC}\label{subsec: strategy}
The non-convex problem \eqref{equ:CSKTRridge}(due to the rank constraint) is convex with respect to each $Z$ block $([\bZ_{k, i}, \cdots, \bZ_{k, i}])_i,$ and $([\calD_1, \cdots, \calD_K ])$. Hence, we use a block coordinate strategy to minimize it. Our algorithm, called Alternated \textbf{K-CSC} (\textbf{AK-CSC}), splits the main non-convex problem into several convex subproblems; 1) by freezing $\calD$ and all except one $Z$ block at a time ($\calZ$-step) 2) by freezing only the activation tensor ($\calD$-step).
Algorithm \ref{alg:seq_methodo} presents this process.

\textbf{Activations update, $\calZ$-step.}
In order to solve \eqref{equ:CSKTRridge} with respect to $\calZ$, we proceed as follows: we assume that the dictionary $\calD$ is fixed and we iteratively  solve the problem where all mode except the $\ell$-th one of each activation tensor are constant, for $\ell$ varying between $1$ and $p$.
In other words, for each value of $\ell$, we solve the problem
\begin{equation}
	\label{eq:zstep_1}
	\argmin_{\bZ_{1, \ell}, \cdots, \bZ_{K, \ell}}\Bigg\lVert \calY - \sum_{k=1}^{K} \calD_k \star_{1, \cdots, p} [\![\bZ_{k, 1}, \cdots, \bZ_{k, p}]\!] \Bigg\rVert_F^2 + \alpha_\ell\sum_{k=1}^{K} \lVert \bZ_{k, \ell}\rVert_1 + \beta_\ell \sum_{k=1}^{K} \lVert \bZ_{k, \ell}\rVert^2_F.
\end{equation}
where the $\lVert \cdot \rVert^2_F$ is added to improve the minimization process, as previously discussed.
Without any loss of generality, we set $\ell=1$ in the rest of this section, as the other values of $\ell$ can be treated similarly.
\begin{proposition} The first term of minimization problem \eqref{eq:zstep_1} can be rewritten as 
	\begin{equation*}
		\label{eq:zstep_1_bis}
 \Big\lVert \calY - \sum_{k=1}^{K} \calD_k \star_{1, \cdots, p} [\![\bZ_{k,1}, \cdots, \bZ_{k, p}]\!] \Big\rVert_F^2 = \sum^C_{c} \Big\lVert \tilde{\calY}_{:, c} - \sum^S_{s} \tilde{\calD}_{s, :, c} \star \bz_{s}^{(\ell)} \Big\rVert^2_2,
	\end{equation*}
	with $C=\prod_{i=2}^{p} n_i$, $S=K  R$ and $\bz_{s}^{(\ell)} = \big[z_{1, 1}^{(\ell)}, \cdots, z_{1, R}^{(\ell)}, z_{2,1}^{(\ell)}, \cdots, z_{K, R}^{(\ell)} \big]$.
\end{proposition}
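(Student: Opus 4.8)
\emph{Proof strategy.} The plan is to expand the Kruskal operators, exploit the bilinearity of the multidimensional convolution $\star_{1, \cdots, p}$ together with the fact that it factors as a composition of convolutions along disjoint sets of modes, and finally regroup the squared entries of the Frobenius norm according to the transverse multi-index. As in the text we fix $\ell = 1$; the remaining values of $\ell$ are handled by permuting the roles of the modes (with $C = \prod_{i \ne \ell} n_i$).

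First I would replace each Kruskal operator by its CP expansion, $[\![\bZ_{k,1}, \cdots, \bZ_{k,p}]\!] = \sum_{r=1}^{R} \bz^{(1)}_{k,r} \circ \cdots \circ \bz^{(p)}_{k,r}$, and, since $\star_{1, \cdots, p}$ is bilinear, pull the two sums out of the convolution:
$$
\sum_{k=1}^{K} \calD_k \star_{1, \cdots, p} [\![\bZ_{k,1}, \cdots, \bZ_{k,p}]\!] = \sum_{k=1}^{K} \sum_{r=1}^{R} \calD_k \star_{1, \cdots, p} \big( \bz^{(1)}_{k,r} \circ \cdots \circ \bz^{(p)}_{k,r} \big).
$$
I would then relabel the $KR$ pairs $(k,r)$ by a single index $s \in \{1, \cdots, S\}$, $S = KR$, writing $\bz^{(\ell)}_s$ for the mode-$\ell$ factor attached to $s$ and recording which atom $\calD_k$ corresponds to each $s$.

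The crucial step is the mode-separation identity: on a rank-one activation the multidimensional convolution splits across modes, so
$$
\calD_k \star_{1, \cdots, p} \big( \bz^{(1)}_s \circ \cdots \circ \bz^{(p)}_s \big) = \Big( \calD_k \star_{2, \cdots, p} \big( \bz^{(2)}_s \circ \cdots \circ \bz^{(p)}_s \big) \Big) \star_1 \bz^{(1)}_s,
$$
where the inner tensor lies in $\IR^{w_1 \times n_2 \times \cdots \times n_p}$. Writing the transverse multi-index as $c = (i_2, \cdots, i_p)$, which ranges over $C = \prod_{i=2}^{p} n_i$ values, and defining $\tilde{\calD}_{s, :, c} \in \IR^{w_1}$ to be the mode-$1$ fiber $\tilde{\calD}_{s, :, c}(\,\cdot\,) = \big( \calD_k \star_{2, \cdots, p} ( \bz^{(2)}_s \circ \cdots \circ \bz^{(p)}_s ) \big)(\,\cdot\,, i_2, \cdots, i_p)$, the $(i_1, i_2, \cdots, i_p)$-th entry of $\sum_k \calD_k \star_{1, \cdots, p} [\![\bZ_{k,1}, \cdots, \bZ_{k,p}]\!]$ reduces to $\sum_s \big( \tilde{\calD}_{s, :, c} \star \bz^{(1)}_s \big)(i_1)$, and the convolution length $w_1 + m_1 - 1 = n_1$ is the expected one. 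At this point I would double-check the support/truncation rules encoded in $\text{$\mathcal{C}$irc}(\cdot)$ so that no boundary terms are silently dropped by the factorization.

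Finally I would unfold $\lVert \cdot \rVert_F^2$ as the sum of squares over all entries of $\calY$, summing first over $c = (i_2, \cdots, i_p)$ and then over $i_1$; with the mode-$1$ fiber $\tilde{\calY}_{:, c} \in \IR^{n_1}$ defined by $\tilde{\calY}_{:, c}(i_1) = \calY(i_1, i_2, \cdots, i_p)$, this is exactly $\sum_{c} \big\lVert \tilde{\calY}_{:, c} - \sum_{s} \tilde{\calD}_{s, :, c} \star \bz^{(\ell)}_s \big\rVert_2^2$. The only genuinely delicate point is the mode-separation identity: one must justify, by a Fubini-type rearrangement of the convolution sum, that $\star_{1, \cdots, p}$ agrees with $\star_1 \circ \star_{2, \cdots, p}$ on rank-one tensors and that this is consistent with the (quasi-)circulant convention; everything else is bookkeeping of indices.
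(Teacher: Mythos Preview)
Your proposal is correct and follows essentially the same route as the paper: expand the Kruskal activations into rank-one terms, split the $p$-dimensional convolution into a $1$D convolution along mode~$\ell$ composed with a $(p{-}1)$-dimensional convolution over the remaining modes, define $\tilde{\calD}_{s,:,c}$ as the mode-$\ell$ fiber of that partial convolution, and regroup the Frobenius norm over the transverse index $c$. The only cosmetic difference is that the paper carries out your ``mode-separation identity'' by writing the convolution sum out in full and factoring the product $\prod_i \bar z^{(i)}_{k,r}(i_i-j_i)$ by hand, whereas you invoke the identity $\star_{1,\cdots,p}=\star_1\circ\star_{2,\cdots,p}$ on rank-one tensors directly; the underlying Fubini rearrangement is identical.
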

\begin{proof}
	In the following, for all $k$, we denote by $\bar{\calZ_k} = \sum_{r=1}^{R}  \bar{z}_{k, r}^{(1)} \circ \cdots \circ \bar{z}_{k, r}^{(p)} \in \mathbb{Y}$ the tensor where we add $0$ on each dimension to reach the one of $\calZ$.
	\begin{align*}
	&\Big\lVert \calY - \sum_{k=1}^{K} \calD_k \star_{1, \cdots, p} \calZ_k \Big\rVert_F^2
	= \sum_{i_1=1, \cdots, i_p=1}^{n_1, \cdots, n_p} \Big(\calY_{i_1, \cdots, i_p} - \sum_{k=1}^{K}\sum_{r=1}^{R}\sum_{j_1=1}^{w_1}  \bar{z}_{k, r}^{(1)}(i_1 - j_1) \\ &\hspace{15em} \sum_{j_2=1, \cdots, j_p=1}^{w_2, \cdots, w_p}\calD_{k, j_1, \cdots, j_p}  \bar{z}_{k, r}^{(2)}(i_2 - j_2) \cdots  \bar{z}_{k, r}^{(p)}(i_p - j_p)\Big)^2 \\
	\end{align*}
	\begin{align*}
	=& \sum_{i_1=1, \cdots, i_p=1}^{n_1, \cdots, n_p} \Big(\calY_{i_1, \cdots, i_p} - \sum_{k=1}^{K}\sum_{r=1}^{R}\sum_{j_1=1}^{w_1}  \bar{z}_{k, r}^{(1)}(i_1 - j_1) \Big(\calD_{k; j_1, :, \cdots, :} \star_{2, \cdots, p} z_{k, r}^{(2)} \circ \cdots  \circ z_{k, r}^{(p)}\Big)_{i_2, \cdots, i_p} \Big)^2 \\
	=& \sum_{i_2=1, \cdots, i_p=1}^{n_2, \cdots, n_p} \Big\lVert \calY_{:, i_2, \cdots, i_p} - \sum_{k=1}^{K}\sum_{r=1}^{R} \tilde{\calD}_{k, r, :, i_2 \cdots, i_p} \star  z_{k, r}^{(1)} \Big\rVert^2_2 = \sum^C_{c=1} \Big\lVert \calY_{:, c} - \sum^S_{s=1} \tilde{\calD}_{s, :, c} \star z_{s}^{(1)} \Big\rVert^2_2, \\
	\end{align*}
	where $\tilde{\calD}_{k; r, j_1, i_2 \cdots, i_p} = \Big(\calD_{k; j_1, :, \cdots, :} \star_{2, \cdots, p} z_{k, r}^{(2)} \circ \cdots  \circ z_{k, r}^{(p)}\Big)_{i_2, \cdots, i_p}.$
	\end{proof}
From the previous proposition, it is clear that in \eqref{eq:zstep_1}, each subproblem is \textit{a CSC with multichannel dictionary filters and single-channel activation maps} \citep{wohlberg2016convolutional}, i.e. we need to solve 
\begin{equation*}
	\argmin_{\bz_{s}^{(\ell)}} \sum^C_{c} \Big\lVert \tilde{\calY}_{:, c} - \sum^S_{s} \tilde{\calD}_{s, :, c} \star \bz_{s}^{(\ell)} \Big\rVert^2_2 + \alpha_\ell \sum^S_{s} \lVert \bz_{s}^{(\ell)} \rVert_1 + \beta_\ell \sum^S_{s} \lVert \bz_{s}^{(\ell)} \rVert^2_2.
\end{equation*}
Therefore, this Z block step can be solved using standard multi-channel CSC algorithms (see \citep{garcia2018convolutional} for a complete review).

\textbf{Dictionary update, $\calD$-step.}
Given the $K$ activation tensors $(\calZ_k)_k$, the dictionary update aims at improving how the model reconstructs $\calY$ by solving

\begin{equation}
\label{equ:CSKTRridge2}
\argmin_{{\forall k, \calD_k \in \mathbb{D}, \quad  \lVert \calD_k \rVert_F \leq 1}} \Big\lVert\calY - \sum^K_{k=1}\calD_k \star_{1,\cdots,p} [\![\bZ_{k, 1}, \cdots, \bZ_{k, p}]\!] \Big\rVert^2_F.
\end{equation}
This step presents no significant difference with existing methods. This problem is smooth and convex and can be solved using classical algorithms \citep{mairal2010online, yellin2017blood, chalasani2013fast}.

\begin{algorithm}[t]
	\KwIn{Signal $\calY$, $(\alpha_1, \cdots, \alpha_p)$}
	\KwOut{$(\calZ_k)_k$, $(\calD_k)_k$}
	\For{$t = 1, 2, \cdots $}{
		\textbf{-- (Z-step) --} \\
		\For{$m = 1, 2, \cdots, p $}{
			$\tilde{\calY} \longleftarrow \text{unfold}\big(\calY, m\big)$ \;
			\textit{/* Construction of the specific dictionary */} \;
			$s \longleftarrow 1$ \;
			\For{$k = 1, 2, \cdots, K$}{
				$\tilde{\calZ} \longleftarrow [\![\bZ^{(t+1)}_{k,1}, \cdots, \bZ^{(t+1)}_{m-1}, \bZ^{(t)}_{m+1}, \cdots, \bZ^{(t)}_{k, p}]\!]$ \;
				\For{$r = 1, 2, \cdots, R$}{
					\For{$a = 1, 2, \cdots, w_m$}{
							$\tilde{\calD}_{s, a, :} \longleftarrow \text{vectorized}\big(\calD^{(t)}_{k, :, \cdots, a, \cdots, :} \star_{1, \cdots, (m-1), (m+1), \cdots, p} \tilde{\calZ}\big)$  \;
						}
					$s \longleftarrow s + 1$  \;
				}	
			}
			\textit{/* Update of the $m$-th $Z$ block */} \;
			$\bZ_m^{(t+1)} \longleftarrow \text{reshape}\Big(\displaystyle\argmin_{\bz^{(m)}_{s}} \sum^C_{c} \Big\lVert \tilde{\calY}_{:, c} - \sum^S_{s} \tilde{\calD}_{s, :, c} \star \bz^{(m)}_{s} \Big\rVert^2_2 + \alpha_m \lVert \bz^{(m)}_{s} \rVert_1\Big)$  \;
		}
		\vspace{1em}
		\textbf{-- (D-step) --} \\
		$(\calD^{(t + 1)})_k \longleftarrow \displaystyle\argmin_{{\forall k, \calD_k \in \mathbb{D}, \lVert \calD_k \rVert_F \leq 1}} \Bigg\lVert\calY - \sum^K_{k=1}\calD_k \star_{1,\cdots,p} [\![\bZ^{(t+1)}_{1, 1}, \cdots, \bZ^{(t+1)}_{K, p}]\!] \Bigg\rVert^2_F$ \;
	}
	\caption{Alternated Kruskal Convolutional Sparse Coding (AK-CSC)}
	\label{alg:seq_methodo}
\end{algorithm}
\DecMargin{2em}
	\section{Related work}
\textbf{Related models.} With specific choices on the parameters or on the dimension values, the K-CSC model reduces to well-known CSC problems. In the following, we enumerate some of them which also use the (multidimensional) convolutional product.
\begin{itemize}
	\item \textbf{Univariate CSC:} For vector-valued atoms and signals ($p=1$), our model  reduces to
	 the $1$-D Convolutional Dictionary Learning model (CDL). The sparse coding step -- i.e. the minimization on $(\bz_k)_k$ -- is commonly referred as Convolutional Basis Pursuit DeNoising where the two leading approaches are based on the the Fast Iterative
	Shrinkage-Thresholding Algorithm (FISTA) \citep{beck2009fast} and on ADMM \citep{boyd2011distributed}. The dictionary update step -- i.e. the minimization on $(\bd_k)_k$ -- is referred as Method of Optimal Directions (with a constraint on the filter normalization) where, again, the most efficient solutions are based on FISTA and ADMM \citep{garcia2018convolutional}.
	
	\item \textbf{Multivariate CSC:} If $p>1$ and $R= +\infty$ (i.e. no low-rank constraint), our model reduces to the multivariate CDL model also referred as multi-channel CDL. To the best of our knowledge, the only reference to multivariate CSC is \citep{wohlberg2016convolutional}, where the author proposes two models for $3$-channel images. More recently, \citep{garcia2018convolutional} propose scalable algorithms for hyper-spectral images (tensor of order $3$ with a high dimension on the third mode).
	
	\item \textbf{Multivariate CSC with rank-1 constraint:} If $p=2$, $R=1$ and $w_2=1$ (i.e. vector-valued atoms), our model reduces to the one recently presented in \citep{la2018multivariate}. In this work, the authors strongly rely on the rank one constraint to solve the CSC problem and only consider spatio-temporal data.
\end{itemize}

\textbf{Differences with recent tensor based approaches.} 
Some previous works have proposed different approaches to CSC in a tensor setting, albeit without a low rank setting -- a key component of our approach.
In \citep{jiang2018efficient}, the authors introduce a new formulation  based on a t-linear combination (related to the t-product). In
\citep{bibi2017high} they propose a generic CSC model based on tensor factorization strategy called t-SVD which also use the t-product. Notice that, while this new formulation reduces back to 2-D CSC when the tensor order is set to particular sizes, the notion of low-rank on the activations is not considered.
Other tensor-based CSC models enforce the low-rank constraint on the dictionary instead of the activations using a Tucker or a CP decomposition \citep{zhang2017tensor, tan2015tensor} or tensor factorization techniques \citep{huang2015convolutional}.
	\begin{figure}[t]
	\centering
	\includegraphics[width=1.\linewidth]{./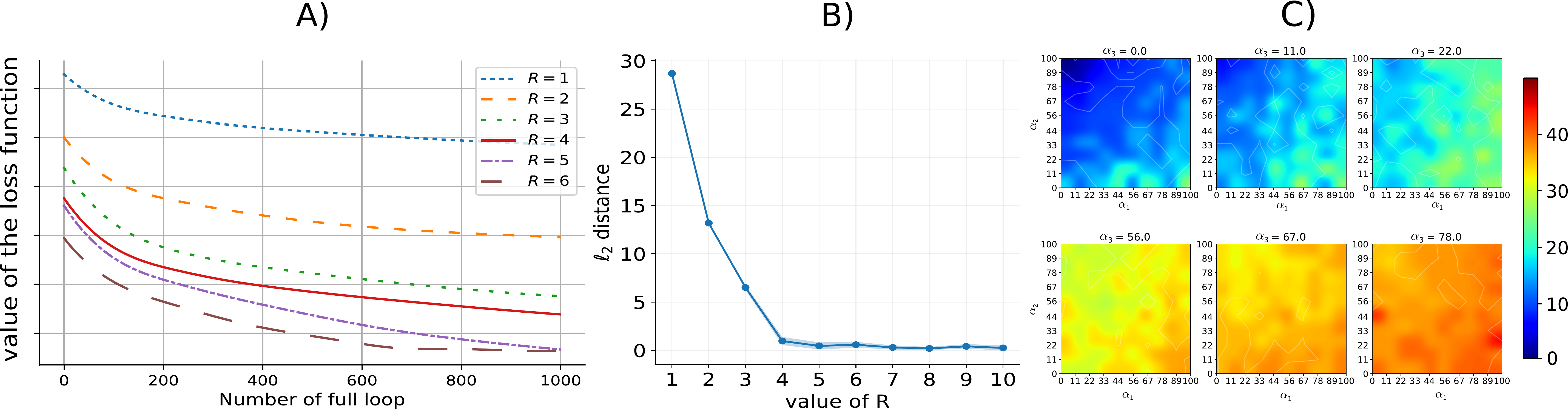}
	\caption{A) Loss \eqref{equ:CSKTRridge} as a function of the rank parameter $R$ and the number of full loops ($\calZ$ and $\calD$ steps). B) $\ell_2$-distance after convergence as a function of the rank parameter $R$ (the true one being $R_*=4$) on $8$ trials C) Heatmap of the $\ell_2$-distance for several hyperparameters values ($R=R_*$).}
	\label{fig:hyperparameter_grid}
\end{figure}

\section{Experiments}\label{sec: exp}
In this section we evaluate our tensor-based dictionary learning framework \textbf{K-CSC} on both synthetic and real-world datasets. We compare our algorithm \textbf{AK-CSC} to state-of-the-art dictionary learning algorithm based on ADMM. Results are for tensor-structured output regression problems on which we report $\ell_2$-distance. All experiments are conducted on a linux laptop with $4$-core $2.5$GHz Intel CPUs using Tensorly \citep{JMLR:v20:18-277}, Sporco \citep{wohlberg2017sporco} and standard python libraries.

\textbf{$\calZ$-step solver.}
As shown in the previous section, the main problem can be rewriten as a regression problem. Hence, to solve the $\calZ$-step, it is possible to use standard tensor regression solvers \citep{zhou2013tensor, li2017near, he2018boosted}. However, it necessitates the construction of an enormous circulant tensor which is not tractable in practice due to memory limitation. In section \ref{sec: model estimation}, we show that the $\calZ$-step necessitates to solve $p$ multi-channel CSC problem with a large amount of channels ($C=\prod_{i=2}^{p} n_i$). While this problem has received only limited attention for many more than three channels, in our experiment, we used the algorithm proposed in \citep{garcia2018convolutional} which turns out to be scalable regarding the value of $C$.

\textbf{Initialization.}
Each activation subproblem is regularized with a $\ell_1$-norm, which induces sparsity on a particular mode. As an example, consider the update of the $Z$ block $([\bZ_{k, i}, \cdots, \bZ_{k, i}]),$ during the $\calZ$-step (see equation \eqref{eq:zstep_1}). From \cite{tibshirani2015statistical}, we know that there exists a value $\alpha^{\max}_{i}$ above which the subproblem solution is always zeros. As $\alpha^{\max}_{i}$ depends on the "atoms" $\tilde{\calD}_{s, :, c}$ and on the multichannel signal $\tilde{\calY}_{:, c}$, its value changes after a complete loop. In particular, its value might change a lot between the initialization and the first loop. This is problematic since we cannot use a regularization $\alpha_i$ above this initial $\alpha^{\max}_{i}$. The standard strategy to initialize univariate CSC methods is to generate Gaussian white noise atoms. However, as these atoms generally poorly correlate with the signals, the initial value of $\alpha^{\max}_{i}$ are low compared to the following ones. To fix this problem, we use the strategy propose in \citep{la2018multivariate} and initialize the dictionary with random parts of the signal. Furthermore, in all the experiments, a zero-padding is added to the initial tensor $\calY$.
\paragraph{Synthetic data.}
To highlight the behavior of \textbf{AK-CSC} and the influence of the hyperparameters, we consider two different scenarios; 1) The rank is unknown 2) The rank is known. In both cases, the true dictionary is given and only the CSC task is evaluated. We generate $K=10$ random tensor atoms of size $\IR^{2 \times 4 \times 8}$ where entries follow a Gaussian distribution with mean $0$ and standard deviation in $[1, 10]$. Each atom is associated to a sparse activation tensor of CP-rank $R^*=4$. The third order tensor $\calY$ generated by model \eqref{equ:convolve_p_rank} is in $\IR^{16 \times 32 \times 64}$. First, we illustrate the convergence of \textbf{AK-CSC} by plotting the loss \eqref{equ:CSKTRridge} as a function of the rank parameter $R$ (Figure \ref{fig:hyperparameter_grid} A)). Convergence is reached after a reasonable number of full loops. It appears that choosing a rank $R$ greater than the true one permits a faster convergence. However, as depicted in Figure \ref{fig:hyperparameter_grid} B), an over estimation of the rank does not increase nor decrease the reconstruction error. Also note that, as expected, the error after convergence drastically decreases when we reach the true rank $R_*$.
The research of the best set of hyperparameters for $\balpha = (\alpha_1, \alpha_2, \alpha_3)$ is done in $[0, 100]^3$. Figure \ref{fig:hyperparameter_grid} C) illustrates how the three hyperparameters related to the sparsity influence the $\ell_2$-distance, i.e. the reconstruction of $\calY$. We see that our method is robust to small modification of the hyperparameters. This is an important property which facilitates the search of the best set of hyperparameters. \\

\paragraph{Color Animated Pictures.} 
We consider a RGB-animated picture composed of $20$ images of size $30 \times 30 \times 3$ that make up the animation of Mario running.
Hence, the animated picture is a $4$-th order tensor of size $\calY \in \IR^{30 \times 30 \times 3 \times 20}$. The objective is to learn a maximum of $K=20$ RGB-animated atoms in $\IR^{20 \times 10 \times 3 \times 3}$ in order to reconstruct $\calY$. We compare the reconstruction of our algorithm \textbf{AK-CSC} when $R=1$ to the ADMM approach without low-rank constraint, which is a classical and efficient CSC solver (see comparative section \citep{wohlberg2016efficient}). Figure \ref{fig:Mario_reconstruct} A) shows that for the same level of sparsity $\balpha$ our method always uses fewer non-zero coefficients and yet provides a better reconstruction. Indeed, the rank constraint on the activations allows to choose more accurate non-zero coefficients. For instance, Figure \ref{fig:Mario_reconstruct}B) shows that even with $2.5$ times less non-zero coefficients, \textbf{AK-CSC} provides a visual better reconstruction.

\begin{figure}[t]
\centering
\includegraphics[width=1.\linewidth]{./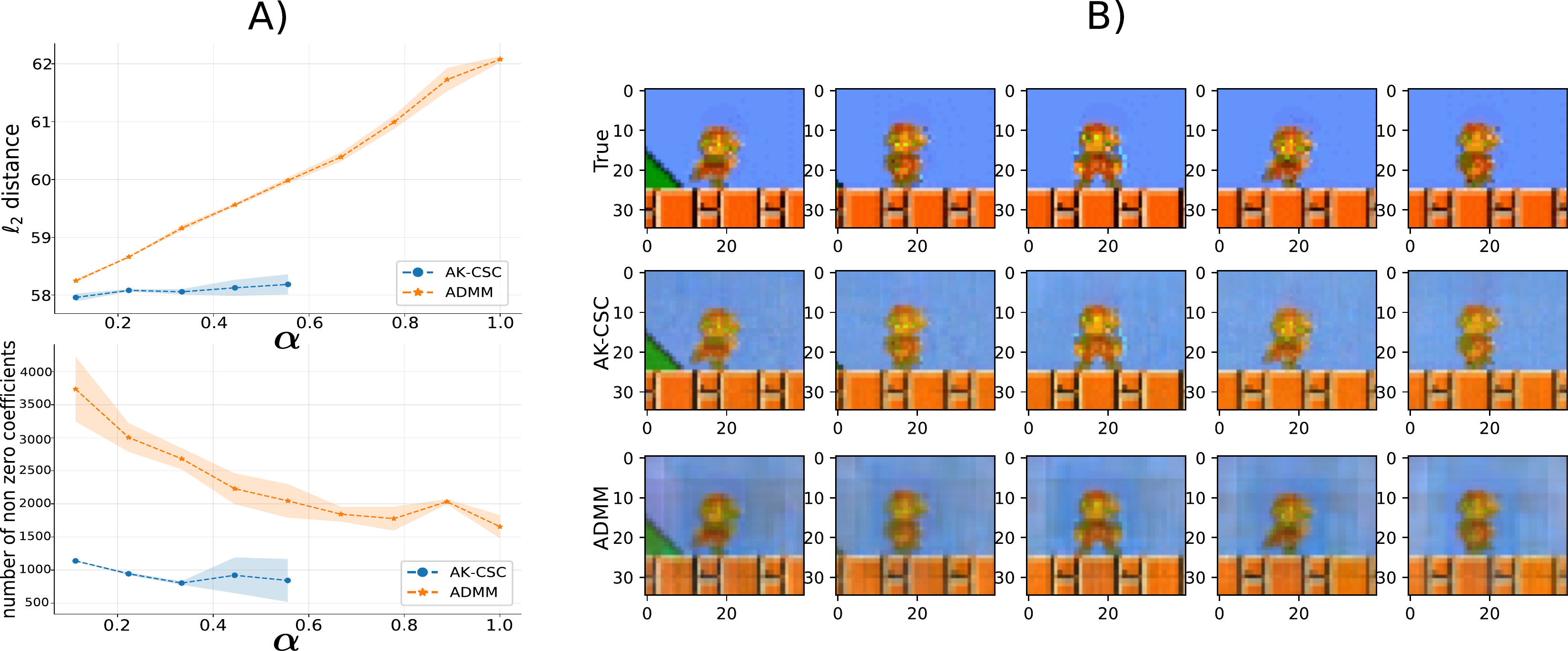}
\caption{A) $\ell_2$-distance and number of non-zero coefficients as a function of the sparsity level $\balpha$ on $20$ trials. B) First $5$ frames of: Original data (top), reconstruction with \textbf{AK-CSC} (middle) and, reconstruction with ADMM (bottom).}
\label{fig:Mario_reconstruct}
\end{figure}
\paragraph{Functional Magnetic Resonance Imaging dataset.} 
In a last example, we analyze functional Magnetic Resonance Imaging data (fMRI) from the Autism Brain Imaging Data Exchange (ABIDE) \citep{di2008functional}. The fMRI data is a third-order tensor in $\IR^{31 \times 37 \times 31}$. We learn $K=20$ atoms of size $\IR^{10 \times 10 \times 10}$ and of rank $R=1$. As in the previous example, we compare the reconstruction of our algorithm \textbf{AK-CSC} to the ADMM approach. The resulting reconstruction is displayed on Figure \ref{fig:fMRI_reconstruct} and the learned atoms are in the supplementary materials. It appears that for the same level of sparsity, the reconstruction performance is more efficient with \textbf{AK-CSC}. Furthermore, the learned atoms are more informative than those with the classical method.

\begin{figure}
	\centering
	\includegraphics[width=1.\linewidth]{./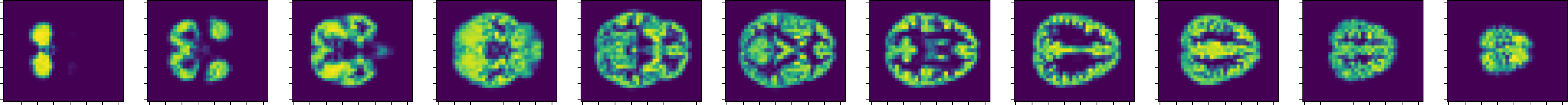}
	\includegraphics[width=1.\linewidth]{./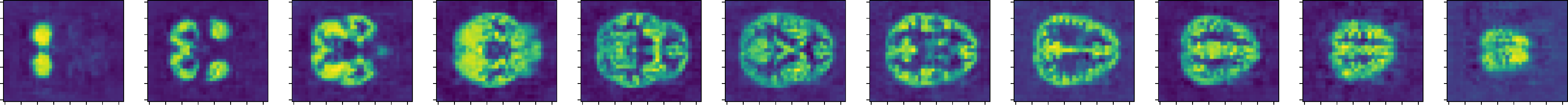}
	\includegraphics[width=1.\linewidth]{./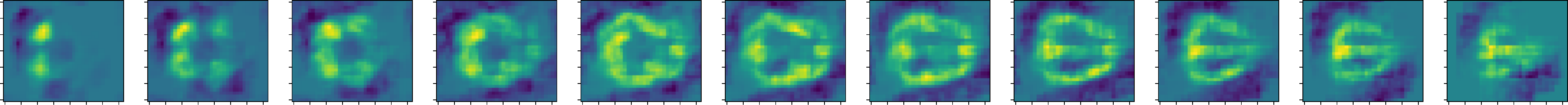}
	\caption{Original data (top), reconstruction with \textbf{AK-CSC} (middle) and, reconstruction with ADMM (bottom).}
	\label{fig:fMRI_reconstruct}
\end{figure}
	\section{Conclusion}
In this paper, we introduced \textbf{K-CSC}, a new multivariate convolutional sparse coding model using tensor algebra that enforces both element-wise sparsity and  low-rankness of the activations tensors. We provided an efficient algorithm based on alternate minimization called \textbf{AK-CSC} to solve this model, and  
we showed that K-CSC can be rewritten as a Kruskal Tensor regression problem.
Finally, we showed that \textbf{AK-CSC} achieves good performances on both synthetic and real data.
		
	\bibliographystyle{unsrtnat}
	\bibliography{sample}

\section{Detailed proof}
We detail the proof of the following proposition from the main paper.
\begin{proposition} The first term of minimization problem ($4$)  can be rewritten as 
	\begin{equation*}
	\Big\lVert \calY - \sum_{k=1}^{K} \calD_k \star_{1, \cdots, p} [\![\bZ_{k,1}, \cdots, \bZ_{k, p}]\!] \Big\rVert_F^2 = \sum^C_{c} \Big\lVert \tilde{\calY}_{:, c} - \sum^S_{s} \tilde{\calD}_{s, :, c} \star \bz_{s}^{(\ell)} \Big\rVert^2_2,
	\end{equation*}
	with $C=\prod_{i=2}^{p} n_i$, $S=K  R$ and $\bz_{s}^{(\ell)} = \big[z_{1, 1}^{(\ell)}, \cdots, z_{1, R}^{(\ell)}, z_{2,1}^{(\ell)}, \cdots, z_{K, R}^{(\ell)} \big]$.
\end{proposition}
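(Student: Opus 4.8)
The plan is to reduce the $p$-dimensional convolution of an atom $\calD_k$ against a CP-rank-$R$ activation tensor to a family of independent one-dimensional convolutions, exploiting the separability of the rank-one summands in the CP decomposition; this is the same mechanism used in Proposition~\ref{prop: multiconv regress}, pushed one step further so that a full mode of shifts survives.

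\textbf{Step 1 (expand the Kruskal operator).} By Proposition~\ref{prop:1}, $[\![\bZ_{k,1},\ldots,\bZ_{k,p}]\!]=\sum_{r=1}^{R}\bz_{k,r}^{(1)}\circ\cdots\circ\bz_{k,r}^{(p)}$, so the reconstruction term equals $\sum_{k=1}^{K}\sum_{r=1}^{R}\calD_k\star_{1,\ldots,p}\big(\bz_{k,r}^{(1)}\circ\cdots\circ\bz_{k,r}^{(p)}\big)$. Writing this entrywise --- with the zero-padding convention $\bar{\calZ}_k$ of Section~\ref{sec: model estimation}, so the convolution output lives in $\mathbb{Y}$ --- the $(i_1,\ldots,i_p)$ coordinate is a sum over shift indices $j_1,\ldots,j_p$ in which $\bar{\bz}_{k,r}^{(1)}(i_1-j_1)$ depends on $j_1$ only; Fubini for finite sums then lets me pull the sum over $(j_2,\ldots,j_p)$ inside, producing a partial convolution $\star_{2,\ldots,p}$ over the last $p-1$ modes of the slice $\calD_{k;j_1,:,\ldots,:}$.

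\textbf{Step 2 (compound dictionary, $1$-D reduction).} Set $\tilde{\calD}_{k;r,j_1,i_2,\ldots,i_p}\triangleq\big(\calD_{k;j_1,:,\ldots,:}\star_{2,\ldots,p}\bz_{k,r}^{(2)}\circ\cdots\circ\bz_{k,r}^{(p)}\big)_{i_2,\ldots,i_p}$. Then Step 1 reads $\big(\calD_k\star_{1,\ldots,p}\calZ_k\big)_{i_1,\ldots,i_p}=\sum_{r=1}^{R}\big(\tilde{\calD}_{k;r,:,i_2,\ldots,i_p}\star\bz_{k,r}^{(1)}\big)_{i_1}$: the full $p$-D convolution against a rank-one tensor collapses, along mode $1$, to an ordinary $1$-D convolution of the compound filter with $\bz_{k,r}^{(1)}$. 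This associativity/separability identity is the only place where an actual computation happens.

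\textbf{Step 3 (refold the Frobenius norm and relabel).} Substituting into $\|\calY-\sum_k\calD_k\star_{1,\ldots,p}\calZ_k\|_F^2=\sum_{i_1,\ldots,i_p}\big(\calY_{i_1,\ldots,i_p}-\sum_{k,r}(\tilde{\calD}_{k;r,:,i_2,\ldots,i_p}\star\bz_{k,r}^{(1)})_{i_1}\big)^2$ and peeling off $i_1$, the inner sum over $i_1$ is a squared Euclidean norm in $\IR^{n_1}$, giving $\sum_{i_2,\ldots,i_p}\big\|\calY_{:,i_2,\ldots,i_p}-\sum_{k,r}\tilde{\calD}_{k;r,:,i_2,\ldots,i_p}\star\bz_{k,r}^{(1)}\big\|_2^2$. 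Finally I relabel: $(i_2,\ldots,i_p)$ runs over a set of size $C=\prod_{i=2}^{p}n_i$ (this relabelling is the mode-$1$ unfolding $\tilde{\calY}$) and $(k,r)$ over a set of size $S=KR$, with $\bz_s^{(1)}$ the induced reordering of the $\bz_{k,r}^{(1)}$, yielding the claimed identity for $\ell=1$; the general $\ell$ follows by symmetry of the modes.

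I do not expect a conceptual obstacle here: the content is essentially index bookkeeping. The one point requiring care is keeping the zero-padding/embedding of the factors $\bz_{k,r}^{(i)}\in\IR^{m_i}$ (with $m_i=n_i-w_i+1$) consistent with the convolution index ranges, so that the partial-convolution identity of Step 2 is a literal equality rather than one modulo boundary truncation --- exactly the role played by the $\bar{\cdot}$ notation already in use in Section~\ref{sec: model estimation}.
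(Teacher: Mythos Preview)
Your proposal is correct and follows essentially the same route as the paper's own proof: expand the CP decomposition, use Fubini to isolate the mode-$\ell$ shift so that modes $2,\ldots,p$ collapse into the compound filter $\tilde{\calD}_{k;r,j_1,i_2,\ldots,i_p}=(\calD_{k;j_1,:,\ldots,:}\star_{2,\ldots,p}\bz_{k,r}^{(2)}\circ\cdots\circ\bz_{k,r}^{(p)})_{i_2,\ldots,i_p}$, recognize the residual as a $1$-D convolution, and then relabel $(i_2,\ldots,i_p)\mapsto c$ and $(k,r)\mapsto s$. The paper likewise fixes $\ell=1$ without loss of generality and uses the same $\bar{\cdot}$ zero-padding convention you flag.
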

\begin{proof}
	In the following, for all $k$, we denote by $\bar{\calZ_k} = \sum_{r=1}^{R}  \bar{z}_{k, r}^{(1)} \circ \cdots \circ \bar{z}_{k, r}^{(p)} \in \mathbb{Y}$ the tensor where we add $0$ on each dimension to reach the one of $\calZ$.
	\begin{align*}
	&\Big\lVert \calY - \sum_{k=1}^{K} \calD_k \star_{1, \cdots, p} \calZ_k \Big\rVert_F^2 \\
	=& \sum_{i_1=1, \cdots, i_p=1}^{n_1, \cdots, n_p} \Big(\calY_{i_1, \cdots, i_p} - \sum_{k=1}^{K}\sum_{j_1=1, \cdots, j_p=1}^{w_1, \cdots, w_p}\calD_{k, j_1, \cdots, j_p} \sum_{r=1}^{R}  \bar{z}_{k, r}^{(1)}(i_1 - j_1) \cdots  \bar{z}_{k, r}^{(p)}(i_p - j_p)\Big)^2 \\
	=& \sum_{i_1=1, \cdots, i_p=1}^{n_1, \cdots, n_p} \Big(\calY_{i_1, \cdots, i_p} - \sum_{k=1}^{K}\sum_{r=1}^{R}\sum_{j_1=1}^{w_1}  \bar{z}_{k, r}^{(1)}(i_1 - j_1) \\ 
	=&\sum_{j_2=1, \cdots, j_p=1}^{w_2, \cdots, w_p}\calD_{k, j_1, \cdots, j_p}  \bar{z}_{k, r}^{(2)}(i_2 - j_2) \cdots  \bar{z}_{k, r}^{(p)}(i_p - j_p)\Big)^2 \\
	=& \sum_{i_1=1, \cdots, i_p=1}^{n_1, \cdots, n_p} \Big(\calY_{i_1, \cdots, i_p} - \sum_{k=1}^{K}\sum_{r=1}^{R}\sum_{j_1=1}^{w_1}  \bar{z}_{k, r}^{(1)}(i_1 - j_1) \Big(\calD_{k; j_1, :, \cdots, :} \star_{2, \cdots, p} z_{k, r}^{(2)} \circ \cdots  \circ z_{k, r}^{(p)}\Big)_{i_2, \cdots, i_p} \Big)^2 \\
	=& \sum_{i_2=1, \cdots, i_p=1}^{n_2, \cdots, n_p} \sum_{i_1=1}^{n_1} \Big(\calY_{i_1, i_2, \cdots, i_p} - \sum_{k=1}^{K}\sum_{r=1}^{R}\sum_{j_1=1}^{w_1}  \bar{z}_{k, r}^{(1)}(i_1 - j_1) \tilde{\calD}_{k; r, j_1, i_2 \cdots, i_p} \Big)^2 \\
	=& \sum_{i_2=1, \cdots, i_p=1}^{n_2, \cdots, n_p} \Big\lVert \calY_{:, i_2, \cdots, i_p} - \sum_{k=1}^{K}\sum_{r=1}^{R} \tilde{\calD}_{k, r, :, i_2 \cdots, i_p} \star  z_{k, r}^{(1)} \Big\rVert^2_2 = \sum^C_{c=1} \Big\lVert \calY_{:, c} - \sum^S_{s=1} \tilde{\calD}_{s, :, c} \star z_{s}^{(1)} \Big\rVert^2_2, \\
	\end{align*}
	where $\tilde{\calD}_{k; r, j_1, i_2 \cdots, i_p} = \Big(\calD_{k; j_1, :, \cdots, :} \star_{2, \cdots, p} z_{k, r}^{(2)} \circ \cdots  \circ z_{k, r}^{(p)}\Big)_{i_2, \cdots, i_p}.$
\end{proof}

\section{Additional results}
\subsection{Synthetic data}
We provide the full heatmap obtained in the \textbf{Synthetic data} section on Figure \ref{fig:hyperparameter_grid4}. This Figure shows how the three hyperparameters related to the sparsity influence the $\ell_2$-distance, i.e. the reconstruction of $\calY$. We see that our method is robust to small modification of the hyperparameters. This is an important property which facilitates the search of the best set of hyperparameters.
\begin{figure}
	\centering
	\includegraphics[width=1.\linewidth]{./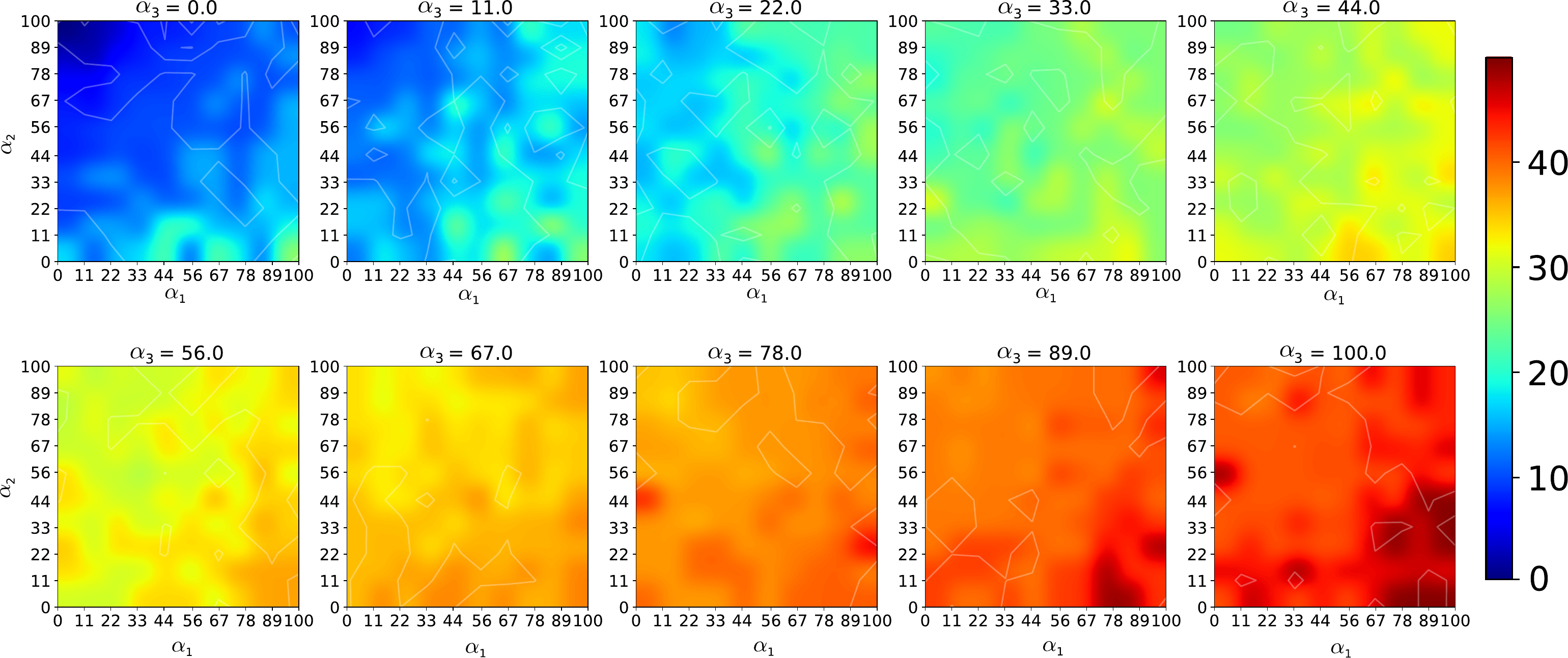}
	\caption{Heatmap of the $\ell_2$-distance between $\calY$ and its reconstruction to tune hyperparameters when considering third order tensor and $R_*=4$.}
	\label{fig:hyperparameter_grid4}
\end{figure}

\subsection{Functional Magnetic Resonance Imaging dataset}
The atoms learned from the main article for functional Magnetic Resonance Imaging (fMRI) are exposed Figure \ref{fig:fMRI_atoms}. We can see that our method learns interesting atoms (on the left) while standard methods failed to use the full dictionary (on the right). Indeed, more than half of the atoms learned by ADMM remain noise.
\begin{figure}[t]
	\centering
	\includegraphics[width=1.\linewidth]{./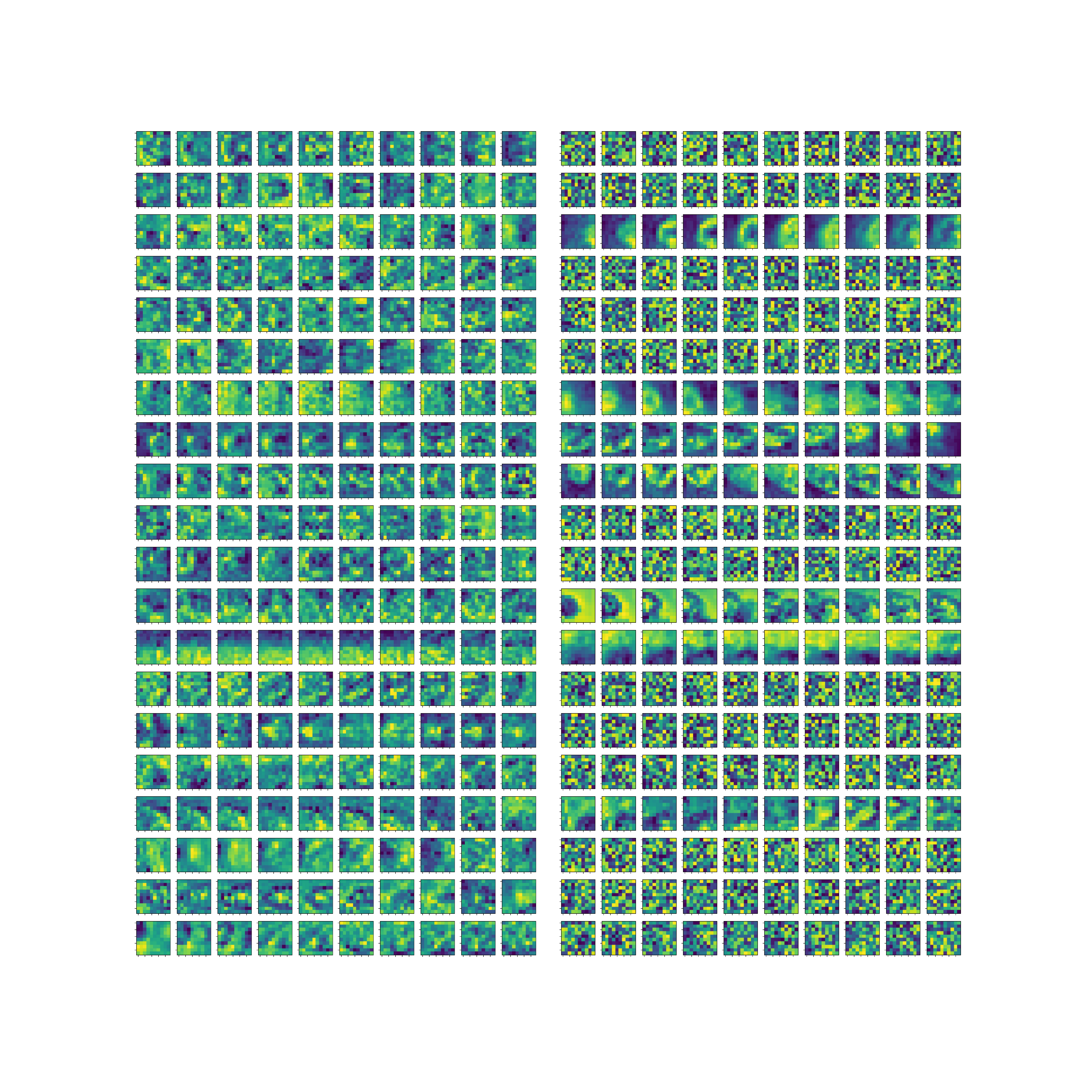}
	\caption{Learned atoms for the fMRI. On the left, the atoms from our method. On the right, atoms from the classical method.}
	\label{fig:fMRI_atoms2}
\end{figure}

\subsection{Color Animated Picture}
To highlight the behavior of \textbf{AK-CSC} with regard to its non-convexity, we performed $20$ trials of the same minimization program. Results of such an experiments is provided on Figure \ref{fig:converge_mario}. This illustration shows that the convergence is globally identical on each trials. Hence, our algorithm seems to be robust.
\begin{figure}[t]
	\centering
	\includegraphics[width=1.\linewidth]{./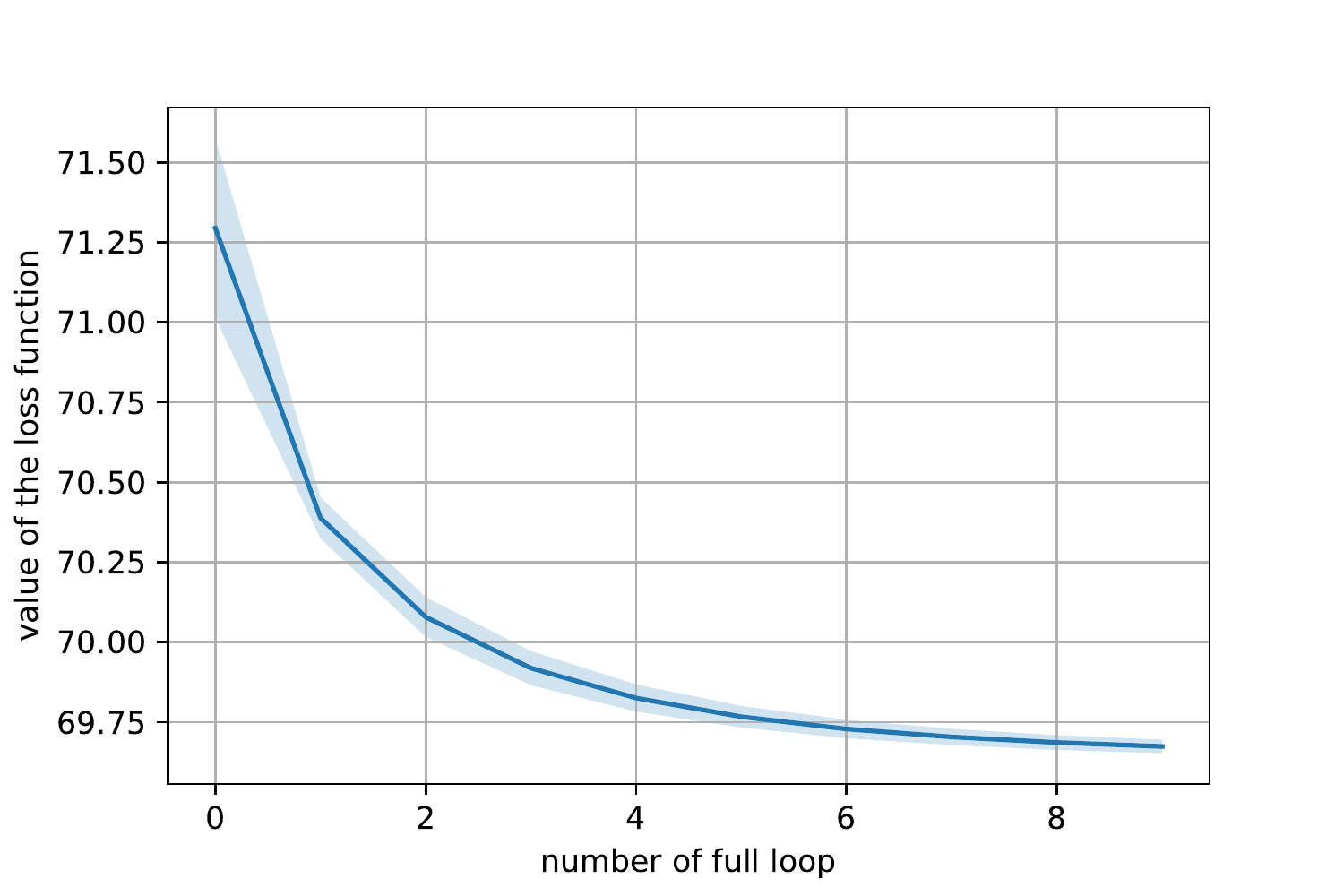}
	\caption{Mean and standard deviation of the evolution of the loss function with AK-CSC regarding the number of full loops ($\calZ$ and $\calD$ steps) on 20 trials.}
	\label{fig:converge_mario2}
\end{figure}

\subsection{Black and White Animated Picture}
In this section, we focus on a black and white version of the animated pictures of Mario. Interestingly, the number of used atoms and of activations is much more smaller for our method. Indeed, it appears that only important atoms are kept as for the other one, the resulting dictionary is redundant. Quantitatively, for a similar error of reconstruction, the number of activations is much smaller. Figure 	\ref{fig:Mario_reconstruct2} shows one example of reconstruction with associated important atoms.

\begin{figure}
	\centering
	\includegraphics[width=1.\linewidth]{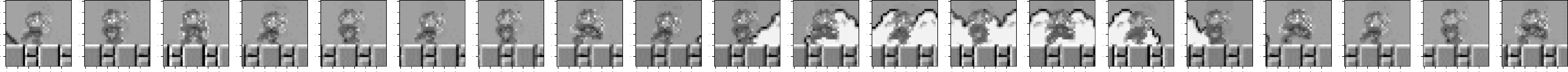}
	\includegraphics[width=1.\linewidth]{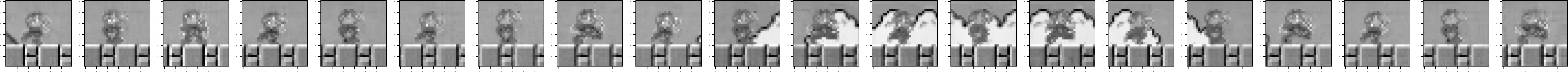}
	\includegraphics[width=1.\linewidth]{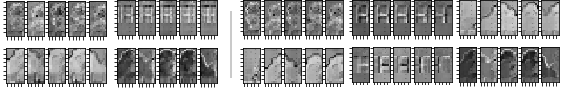}
	\caption{Illustration of the true (on the top) and reconstruct (on the bottom) animated picture. The third picture corresponds to the significant learned animated atoms. On the left, our method and on the right, the classical one.}
	\label{fig:Mario_reconstruct2}
\end{figure}
\end{document}